\theoremstyle{definition}
\newtheorem{definition}{Definition}[section]
\theoremstyle{plain}
\newtheorem{lemma}[definition]{Lemma}
\newtheorem{corollary}[definition]{Corollary}
\theoremstyle{remark}
\newtheorem{remark}[definition]{Remark}
\theoremstyle{plain}
\newglossaryentry{formula}{name=formula,
                           description={A mathematical expression}}
\newacronym{wrt}{w.r.t.}{with respect to}
\newacronym{EU}{EU}{European Union}
\newacronym{iid}{iid}{independent and identically distributed}
\newacronym{iff}{iff}{if and only if}
\newacronym{wlog}{w.l.o.g.}{without loss of generality}
\newacronym{PDE}{PDE}{partial differential equation}
\newacronym{CDF}{CDF}{cumulative distribution function}
\newacronym{OP}{OP}{optimization problem}
\newacronym{AI}{AI}{artificial intelligence}
\newacronym{XAI}{XAI}{explainable artificial intelligence}
\newacronym{ML}{ML}{machine learning}
\newacronym{DL}{DL}{deep learning}
\newacronym{GDL}{GDL}{geometric deep learning}
\newacronym{SOTA}{SOTA}{state of the art}
\newacronym{TP}{TP}{true positives}
\newacronym{FP}{FP}{false positives}
\newacronym{FN}{FN}{false negatives}
\newacronym{TN}{TN}{true negatives}
\newacronym{ACC}{ACC}{accuracy}                     % ACC = (TP+TN) / (TP+FN+FP+TN)
\newacronym{TPR}{TPR}{true positive rate}           % TPR = TP / (TP+FN) = 1 - FNR
\newacronym{FPR}{FPR}{false positive rate}          % FPR = FP / (FP+TN) = 1 - TNR
\newacronym{FNR}{FNR}{false negative rate}          % FNR = FN / (TP+FN) = 1 - TPR
\newacronym{TNR}{TNR}{true negative rate}           % TNR = TN / (FP+TN) = 1 - FPR
\newacronym{PPV}{PPV}{positive predictive value}    % PPV = TP / (TP+FP) = 1 - FDR
\newacronym{FDR}{FDR}{false discorvery rate}        % FDR = FP / (TP+FP) = 1 - PPV
\newacronym{FOR}{FOR}{false omission rate}          % FOR = FN / (FN*TN) = 1 - NPV
\newacronym{NPV}{NPV}{negative predictive value}    % NPV = TN / (FN+TN) = 1 - FOR
\newacronym{ROC}{ROC-curve}{receiver operating characteristic curve}
\newacronym{AUC}{AUC}{area under the (ROC) curve}
\newacronym{IG}{IG}{information gain}
\newacronym{MSE}{MSE}{mean squared error}
\newacronym{MAE}{MAE}{mean absolute error}
\newacronym{MRAE}{MRAE}{mean relative absolute error}
\newacronym{DI}{DI}{disparate impact}
\newacronym{DP}{DP}{demographic parity}
\newacronym{EOs}{EOs}{equalized odds}
\newacronym{EO}{EO}{equal opportunity}
\newacronym{SVM}{SVM}{support vector machine}
\newacronym{MLP}{MLP}{multi layer perceptron}
\newacronym{NN}{NN}{neural network}
\newacronym{GNN}{GNN}{graph neural network}
\newacronym{GCN}{GCN}{graph convolutional network}
\newacronym{PI-GCN}{PI-GCN}{physics-informed graph convolutional network}
\newacronym{MPNN}{GCN}{message passing neural network}
\newacronym{WL}{WL}{Weisfeiler-Lehmann}
\newacronym{WDN}{WDN}{water distribution network}
\newacronym{WDS}{WDS}{water distribution system}
\newacronym{ERC}{ERC}{European Research Council}
\newcommand{\N}{\mathbb{N}}                             % natural numbers
\newcommand{\R}{\mathbb{R}}                             % real numbers
\newcommand{\1}{\mathbb{1}}                             % indicator function
\newcommand{\F}{\mathcal{F}}                            % general sigma field
\newcommand{\Prob}{\mathbb{P}}                          % probability measure
\newcommand{\Exp}{\mathbb{E}}                           % expectation / mean
\newcommand{\Hyp}{\mathcal{H}}                          % hypothesis space
\newcommand{\D}{\mathcal{D}}                            % data set
\newcommand{\X}{\mathcal{X}}                            % input space / feature space
\newcommand{\Y}{\mathcal{Y}}                            % output space / label space
\newcommand{\Sen}{\mathcal{S}}                          % sensitive feature space
\DeclareMathOperator{\Cov}{Cov}
\DeclareMathOperator{\ACC}{\acrshort{ACC}}              % ACC = (TP+TN) / (TP+FN+FP+TN)
\DeclareMathOperator{\TPR}{\acrshort{TPR}}              % TPR = TP / (TP+FN) = 1 - FNR
\DeclareMathOperator{\FPR}{\acrshort{FPR}}              % FPR = FP / (FP+TN) = 1 - TNR
\DeclareMathOperator{\DI}{\acrshort{DI}}
\DeclareMathOperator{\EO}{\acrshort{EO}}
\DeclareMathOperator{\sgd}{sgd}                         % sigmoid
\title{
    {\Large 
    \textbf{Fairness-Enhancing Ensemble Classification in Water Distribution Networks}}
}
\author{
    \small
   Janine Strotherm and Barbara Hammer
}
\affil{
    \small
    Center for Cognitive Interaction Technology, Bielefeld University, Bielefeld, Germany
    \\
    \texttt{\{jstrotherm,bhammer\}@techfak.uni-bielefeld.de}
}
\date{
    \small 
}
\begin{document}
% load cover page etc.
% MODIFY / ADD: 

% ---------- COVERPAGE ----------
\maketitle

% ---------- END COVERPAGE ----------

% ---------- ABSTRACT ----------
\begin{abstract}
As relevant examples such as the future criminal detection software \cite{Angwin2016MLinCriminalJustice_MachineBias} show, fairness of AI-based and social domain affecting decision support tools constitutes an important area of research. In this contribution, we investigate the applications of AI to socioeconomically relevant infrastructures such as those of water distribution networks (WDNs), where fairness issues have yet to gain a foothold. To establish the notion of fairness in this domain, we propose an appropriate definition of protected groups and group fairness in WDNs as an extension of existing definitions. We demonstrate that typical methods for the detection of leakages in WDNs are unfair in this sense. Further, we thus propose a remedy to increase the fairness which can be applied even to non-differentiable ensemble classification methods as used in this context.
\end{abstract}

\textbf{Keywords:} Fairness \textbf{$\cdot$} Disparate Impact \textbf{$\cdot$} Equal Opportunity \textbf{$\cdot$} Leakage Detection in Water Distribution Networks

% ---------- END ABSTRACT ----------

% ---------- FURTHER SETTINGS ----------
%\newpage

% ----- chapter settings
% for counting chapters from here and starting with 0
\setcounter{section}{-1} 
% for heading on each page
%\pagestyle{myheadings}
%\markright{Title of the paper}

% ----- page number settings
% --- OPTION 1: page numbers
% for starting counting pages from here
\setcounter{page}{1} 
% for sort of numbering (arabic = 1,2,3; Roman: I, II, III)
\pagenumbering{arabic}

% -- OPTION 2: no page numbers 
    % remark: need to uncomment \pagenumbering{arabic} and \pagenumbering{Roman} in 8_appendix.tex
%\pagenumbering{gobble}

% ---------- END FURTHER SETTINGS ----------

% ---------- ACTUAL TEXT ----------
\section{Introduction}
Due to the increasing usage of AI-based decision making systems in socially relevant fields of application the question of \textit{fair decision making} gained importance in recent years (cf. \cite{Angwin2016MLinCriminalJustice_MachineBias}, \cite{EuropeanComission2012LegalAspects_GuidelinesTrustworthyAI}). 
Fairness is hereby related to the several (protected) groups or individuals which are affected by the algorithmic decision making and characterized by \textit{sensitive features} such as gender or ethnicity. 
Most algorithms on which these tools are based on rely on data which can be biased with respect to the questions of fairness without intention, resulting in skewed models. 
Also the algorithm itself can discriminate protected groups or individuals without explicitly aiming to do so due to an undesirable algorithmic bias (cf. \cite{Mehrabi2021Overview_SurveyOnFairness,Pessach2022Overview_ReviewOnFairness}).

Several definitions of fairness have been discussed (cf. \cite{Barocas2019Overview_FairnessBook,Castelnovo2022Overview_ClarificationOnFairness,Dwork2012FairnessThroughAwareness,Mehrabi2021Overview_SurveyOnFairness,Pessach2022Overview_ReviewOnFairness,Zafar2017FairClassification_InProcess_CovarianceConstraints}). 
From a legal perspective, one distinguishes between \textit{disparate treatment} and \textit{disparate impact} (cf. \cite{Barocas2019Overview_FairnessBook}). While disparate treatment occurs whenever a group or an individual is intentionally treated differently because of their membership in a protected class, disparate impact is a consequence of indirect discrimination happening despite \enquote{seemingly neutral policy} (cf. \cite{Pessach2022Overview_ReviewOnFairness}). 
From a scientific viewpoint, the variety of fairness notions is much larger where many popular approaches focus mainly on (binary) classification tasks (cf. \cite{Castelnovo2022Overview_ClarificationOnFairness,Mehrabi2021Overview_SurveyOnFairness,Pessach2022Overview_ReviewOnFairness}). 

Besides the definition of fairness, the problem arises how to enhance fairness in well known AI methods while maintaining a reasonable overall performance. 
Approaches can hereby be grouped in pre-processing, in-processing and post-processing techniques (cf. \cite{Pessach2022Overview_ReviewOnFairness}), whereas we will focus on in-processing methods.

The question of fairness becomes especially relevant when the decisions of a \gls{ML} model take impact on socioeconomic infrastructure, such as \glspl{WDN}. To the best of our knowledge, the question of fairness has not been approached within this domain. We address the important problem of leakage detection in \glspl{WDN} and investigate in how far typical models treat different groups of consumers of the \gls{WDN} (in)equally. We hereby focus on \textit{group fairness}, which in contrast to \textit{individual fairness} focuses on treating different groups among the \gls{WDN} equally instead of treating similar individuals similarly (cf. \cite{Mehrabi2021Overview_SurveyOnFairness}). 

To come up with a first approach to improve fairness in such a domain of high social and ethical relevance, based on \cite{Zafar2017FairClassification_InProcess_CovarianceConstraints}, we consider the empirical covariance between the sensitive features and the model's prediction as a proxy for the fairness measure. 
The algorithms can handle even multiple non-binary sensitive features and satisfy both the concept of disparate treatment \textit{and} disparate impact simultaneously (cf. \cite{Zafar2017FairClassification_InProcess_CovarianceConstraints}). 
Then, we extend \cite{Zafar2017FairClassification_InProcess_CovarianceConstraints} by 
(a) giving explicit generalized definitions of well-known fairness measures even for multiple non-binary sensitive features, 
(b) modifying their idea to any possibly non-convex classification model including arbitrary ensemble classifier, and 
(c) presenting a method to handle the problem of potential non-differentiability of AI algorithms.

The rest of the work is structured as followed: 
In section \ref{section_FairnessInMachineLearning}, we introduce two definitions of group fairness for multiple non-binary sensitive features. 
Afterwards, in section \ref{section_LeakageDetectionInWaterDistributionNetworks}, we present a standard methodology to detect leakages in \glspl{WDN} and investigate whether the resulting model makes fair decisions with respect to the previously defined notions of fairness. 
Then, in section \ref{section_FairnessEnhancingLeakageDetectionInWaterDistributionNetworks}, we propose and evaluate several adaptations to this methodology that enhance fairness. 
Finally, our findings are summarized and discussed in section \ref{section_Conclusion}.

The implementation of our results 
%and an extended version of this paper with more detailed insights and proofs 
can be found on GitHub\footnote{\url{https://github.com/jstrotherm/FairnessInWDNS}}.

\section{Fairness in Machine Learning}
\label{section_FairnessInMachineLearning}

Let $(\Omega, \F, \Prob)$ be a probability space of interest, consisting of the sample space $\Omega \neq \emptyset$, a $\sigma$-field $\F$ and a probability measure $\Prob$. Moreover, let 
$\hat{Y}: \Omega \rightarrow \Y$ be some binary classifier, i.e., $\Y = \{0,1\}$, being trained to model some true labels $Y: \Omega \rightarrow \Y$. Usually, $\hat{Y}$ can be written as some model $f: \X \rightarrow \{0,1\}$, applied to the features $X: \Omega \rightarrow \X$, i.e., $\hat{Y} = f(X)$ holds. 
In recent years, the interest towards the question of such classifier $\hat{Y}$ being fair with respect to some additional, sensitive feature $S: \Omega \rightarrow \Sen$ has risen. Mostly, $\Sen = \{0,1\}$ gives binary information about the membership or non-membership of a protected class, such as some certain gender or ethnicity (cf. \cite{Pessach2022Overview_ReviewOnFairness}). 
While the majority of the literature focuses on a single binary sensitive feature $S$ (cf. \cite{Castelnovo2022Overview_ClarificationOnFairness,Mehrabi2021Overview_SurveyOnFairness,Pessach2022Overview_ReviewOnFairness}), 
in this work, we generalize the understanding of fairness to multiple binary sensitive features $S_1, ..., S_K$ 
that model a single non-binary or $K$ different (non-)binary sensitive feature(s).

Within this work, we will focus on group fairness. Assuming that all of the following conditional probabilities exist, one well-known notion of group fairness based on the predictor $\hat{Y}$ and the binary sensitive feature $S$ is called \gls{DI}, requiring that
\begin{align*}
    \frac{
    \Prob(\hat{Y} = 1 ~|~ S = 0)
    }{
    \Prob(\hat{Y} = 1 ~|~ S = 1)
    }
    \geq
    1 - \epsilon
\end{align*}

\noindent is satisfied for some given value $\epsilon \in [0,1]$, assuming that $\{S=0\}$ is the protected group and that the nominator is smaller than the denominator (cf. \cite{Pessach2022Overview_ReviewOnFairness}). 
The disparate impact notion is 
\enquote{designed to mathematically represent the legal notion of \textit{disparate impact}} (cf. \cite{Pessach2022Overview_ReviewOnFairness}). 
It 
assures that the relative amount of positive predictions within the protected group $\{S = 0\}$ deviates at most $100\epsilon\%$ from the relative amount of positive predictions within the non-protected group $\{S = 1\}$.

We generalize this definition to multiple binary sensitive features by
\begin{align}
\label{align_FairnessMeasure_DisparateImpact_Multi}
\begin{split}
    \DI :=
    \min_{ \substack{ k_1, k_2 \in \{1,...,K\} } }
    \frac{
    \Prob(\hat{Y}=1 ~|~ S_{k_1}=1)
    }{
    \Prob(\hat{Y}=1 ~|~ S_{k_2}=1)
    }
    \geq
    1 - \epsilon.
\end{split}
\end{align}

\noindent Criticism of the disparate impact score \gls{DI} could be the missing dependence on the true label $Y$ (cf. \cite{Hardt2016FairClassification_PostProcess_RetrainModel}). 
We thus introduce another notion of group fairness, called \gls{EO}. In standard definition, equal opportunity holds whenever
\begin{align*}
    \left| 
    \Prob(\hat{Y}=1 ~|~ S=0, Y=1) - \Prob(\hat{Y}=1 ~|~ S=1, Y=1) 
    \right| 
    \leq 
    \epsilon
\end{align*}

\noindent is satisfied for some given value $\epsilon \in [0,1]$ (cf. \cite{Mehrabi2021Overview_SurveyOnFairness,Pessach2022Overview_ReviewOnFairness}). 
Equal opportunity ensures the \glspl{TPR} among protected and non-protected groups to differ at most $100\epsilon$\%.

Similarly, we generalize this definition to multiple binary sensitive features:
\begin{align}
\label{align_FairnessMeasure_EqualOpportunity_Multi}
\begin{split}
    \EO :=
    \max_{ \substack{ k_1, k_2 \\ \in \{1,...,K\} } }
    \left| 
    \Prob(\hat{Y}=1 ~|~ S_{k_1}=1, Y=1) 
    - 
    \Prob(\hat{Y}=1 ~|~ S_{k_2}=1, Y=1)
    \right|
    \leq 
    \epsilon.
\end{split}
\end{align}

\begin{remark}
    Our generalized notions of fairness go hand in hand with the conventional ones: In the conventional definitions, a single binary random variable $S$ gives information about the membership of a protected group $\{S = 0\}$ or the membership of a non-protected group $\{S = 1\}$. Our definition handles the existence of $K$ different groups without defining which of the groups are protected in advance. By defining the protected group $\{S = 0\}$ as group 1 and the non-protected group $\{S = 1\}$ as group 2 and the random variables $S_k$ giving information about the membership ($S_k = 1$) or non-membership ($S_k = 0$) of group $k$ for $k=1,2$, the conventional definitions and our definitions (cf. eq. \eqref{align_FairnessMeasure_DisparateImpact_Multi} and \eqref{align_FairnessMeasure_EqualOpportunity_Multi}) coincide, because $\{S=0\} = \{S_1=1\}$ and $\{S=1\} = \{S_2=1\}$ holds.
\end{remark}

\section{Leakage Detection in Water Distribution Networks}
\label{section_LeakageDetectionInWaterDistributionNetworks}

A key challenge in the domain of \glspl{WDN} is to detect leakages. In this task, $\Omega$ corresponds to possible states of a \gls{WDN}, given by time-dependant demands of the end users of the $D$ nodes in the network. We assume that among those, $d$ nodes are provided with sensors (usually, $D \gg d$), which deliver pressure measurements $p(t) \in \R^d$ for different times $t \in \R$ and which can be used for the task at hand. As we usually measure pressure values within fixed time intervals $\delta \in \R_+$, we introduce the notation $t_i := t_0 + i\delta$, where $t_0$ is some fixed reference point with respect to time.

\subsection{Methodology}
\label{subsection_Methodology_In_LeakageDetectionInWaterDistributionNetworks}

There are several methodologies that make use of pressure measurements to approach the problem of leakage detection using \gls{ML}, i.e., by training a classifier $\hat{Y} \in \{0,1\} = \Y$ that predicts the true state of the \gls{WDN} $Y \in \Y$ with respect to the question whether a leak is active (1) or not (0). One standard approach comes in two steps: In first instance, so called \textit{virtual sensors} are trained, i.e., regression models being able to predict the pressure at a given node $j \in \{1,...,d\}$ and some time $t_i \in \R$, based on measured pressure at the remaining nodes $\hat{\jmath} \neq j$ and over some discrete time interval of size $T_r+1 \in \N$. Subsequently, these virtual sensors are used to compute \textit{pressure residuals} of measured and predicted pressure to train an ensemble classifier that is able to predict whether a leakage is present in the \gls{WDN} at the time of the used residual (cf. \cite{Isermann2006FaultDiagnosis}).

\subsubsection{Virtual Sensors}
The virtual sensors $f_j^r: \R^{d_r} \rightarrow \R$ for each node $j \in \{1,...,d\}$ and $d_r := d-1$ are linear regression models trained on leakage free training data  $\D_j^r = \{ (\overline{p}_{\neq j}(t_i), p_j(t_i)) \in \R^{d_r} \times \R ~|~ i = 0,...,n_r \}$. More precisely, $y(t_i) = 0 \in \Y$ holds for all realisations $i = 0,...,n_r$ of $Y$ and the inputs are given by the rolling means $\overline{p}_{\neq j}(t_i) := (T_r+1)^{-1} \sum_{\iota=0}^{T_r} p_{\neq j}(t_i - \iota\delta)$ at all nodes except the node $j$, which is the only preprocessing required for the training pipeline (cf. \cite{Artelt2022Explainability_ThresholdBasedLeakageDetection}).

\subsubsection{Ensemble Leakage Detection}
Standard leakage detection methods rely on the residuals $r_j(t_i) := |p_j(t_i) - f_j^r(\overline{p}_{\neq j}(t_i))| \in \R_+$ we obtain from the true pressure measurements $p(t_i) \in \R^d$ and the virtual sensor predictions $f_j^r(\overline{p}_{\neq j}(t_i)) \in \R$ for each sensor node $j \in \{1,...d\}$ and (possibly unseen) times $t_i \in \R$ (cf. \cite{Isermann2006FaultDiagnosis}).

A simple detection method performing good on standard benchmarks is the \textit{threshold-based ensemble classification} introduced by \cite{Artelt2022Explainability_ThresholdBasedLeakageDetection}: Without any further training, we can define a classifier $f_j^c: \R_+ \rightarrow \Y$ by
\begin{align*}
    f_j^c(r_j(t_i))
    =
    f_j^c(r_j(t_i), \theta_j) 
    := 
    \1_{ \left \{
    r_j(t_i) ~>~ \theta_j
    \right \} }
\end{align*}

\noindent for each sensor node $j \in \{1,...d\}$ and a node-dependant hyperparameter $\theta_j \in \R_+$.
We easily obtain an ensemble classifier $f^c: \X \rightarrow \Y$, called the H-method, with \textbf{H}yperparameter $\Theta := (\theta_j)_{j=1,...,d} \in \X$ for $\X := \R_+^{d_c}$ and $d_c := d$ that predicts whether there is a leakage present in the \gls{WDN} at time $t_i \in \R$ or not, defined by
\begin{align}
\label{align_EnsembleClassifier_LeakageDetection_ThresholdBased}
    f^c(r(t_i))
    =
    f^c(r(t_i), \Theta) 
    :=
    \1_{ \left \{
    \sum_{j=1}^{d_c} f_j^c(r_j(t_i)) ~\geq~ 1
    \right \} }.
\end{align}

\subsubsection{Evaluation} 
\label{subsubsection_Evaluation_In_LeakageDetectionInWaterDistributionNetworks}

We evaluate the H-method in terms of general performance, measured by \gls{ACC}, and in terms of fairness, measured by disparate impact as well as equal opportunity score \gls{DI} and \gls{EO}, respectively (cf. eq. \eqref{align_FairnessMeasure_DisparateImpact_Multi} and \eqref{align_FairnessMeasure_EqualOpportunity_Multi}) in section \ref{subsection_ExperimentalResults_In_LeakageDetectionInWaterDistributionNetworks}, after introducing the application domain and data set.

\subsection{Application Domain and Data Set}
\label{subsection_ApplicationDomainAndDataSet}

One key contribution of this work is to introduce the notion of fairness in the application domain of \glspl{WDN}. The \gls{WDN} considered is \textit{Hanoi} (cf. \cite{Santos2022PressureSesorPlacement_LeakDetection}) displayed in figure \ref{figure_HanoiWithSensorNodesAndProtectedGroups}. It consists of 32 nodes and 34 links. 

To evaluate the H-method presented in section \ref{subsection_Methodology_In_LeakageDetectionInWaterDistributionNetworks} on Hanoi, we generate pressure measurements with a time window of $\delta=10$min. using the atmn toolbox (cf. \cite{Vaquet2023atmn}).
The pressure is simulated at the sensor nodes displayed in figure \ref{figure_HanoiWithSensorNodesAndProtectedGroups} and for different leakage scenarios, which differ in the leakage location and size. As the \gls{WDN} is relatively small, we are able to simulate a leakage at each node in the network and for three different diameters $d \in \{5,10,15\}$cm.
For the preprocessing according to section \ref{subsection_Methodology_In_LeakageDetectionInWaterDistributionNetworks}, we choose $T_r = 2$ such as \cite{Artelt2022Explainability_ThresholdBasedLeakageDetection} do.

\begin{figure}[h!]
\centering
\includegraphics[width=0.75\textwidth]{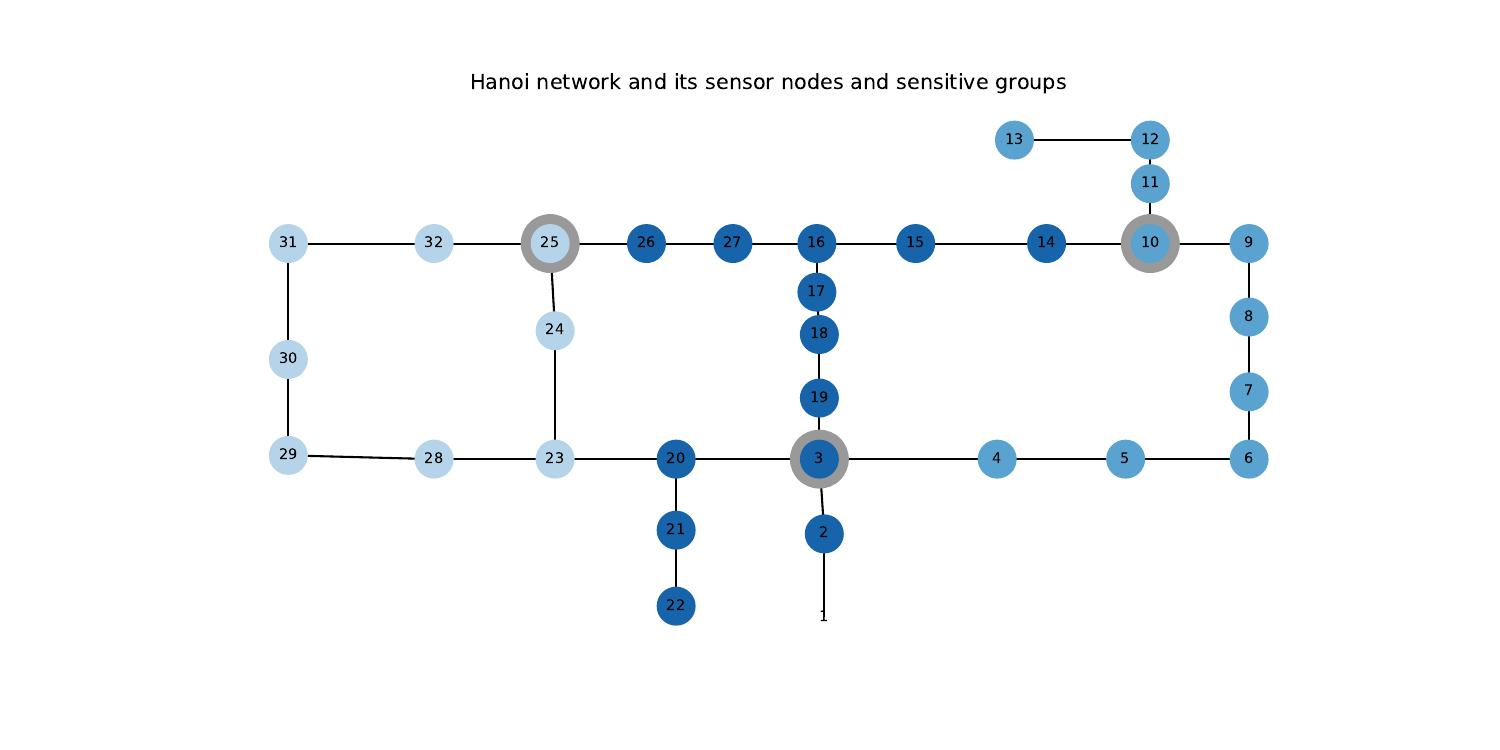}
\caption{The Hanoi \gls{WDN}, its sensor nodes (IDs 3, 10 and 25) and the protected groups, each highlighted in another color (group 1 on the left side in light shade, group 2 in the middle in dark shade, group 3 on the right side in middle shade). The sensor nodes are colored in the same color of the protected group they belong to and highlighted with a grey circle.}
\label{figure_HanoiWithSensorNodesAndProtectedGroups}
\end{figure} 

\noindent The question arises how the leakage detection is related to fairness. Knowing that each node of the network corresponds to a group of consumers, a natural question is whether these local groups benefit from the \gls{WDN} and its related services in equal degree. To ensure that the algorithms that will be presented in section \ref{subsection_Methodology_In_FairnessEnhancingLeakageDetectionInWaterDistributionNetworks} scale to larger \glspl{WDN}, we do not consider single nodes but groups of nodes in the \gls{WDN} as protected groups in terms of fairness. Then, for all $k = 1,...,K$, we define the sensitive feature $S_k \in \{0,1\}$ to give answer to the question whether (1) or whether not (0) a leakage is active in such a protected group $k$.  In terms of equal service one would expect an equally good detection of leakages independent on the leakage location, i.e., the protected group. For Hanoi, we work with $K=3$ different groups, also displayed in figure \ref{figure_HanoiWithSensorNodesAndProtectedGroups}.

Given \textit{this} definition of sensitive features $S_k$ for $k=1,...,K$ in the \gls{WDN}, we obtain the following important results with regard to the notions of fairness.

\begin{lemma}[Equivalence of disparate impact and equal opportunity in \glspl{WDN}]
\label{lemma_EOEquivalenceDI}
    Let $S_k$ be the sensitive feature describing whether a leakage is active in the protected group $k$ of the \gls{WDN} for each $k = 1,...,K$. Moreover, let $\epsilon, \tilde{\epsilon} \in [0,1]$ and define $\max_k := \max_{ k \in \{1,...,K\} } \Prob(\hat{Y}=1 ~|~ S_{k}=1)$.
    
    \begin{enumerate}
        \item If disparate impact holds with $\epsilon$, equal opportunity holds with $\tilde{\epsilon} = \epsilon \max_k$. 

        \item If equal opportunity holds with $\tilde{\epsilon}$, disparate impact holds with $\epsilon = \tilde{\epsilon} ( \max_k )^{-1}$. 
    \end{enumerate}
\end{lemma}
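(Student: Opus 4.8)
The plan is to exploit the domain-specific meaning of the sensitive features, which is exactly what makes disparate impact and equal opportunity collapse onto one another here. The decisive observation is that $S_k = 1$ encodes the event that a leakage is active in group $k$, and any such leakage is in particular a leakage somewhere in the network, so $\{S_k = 1\} \subseteq \{Y = 1\}$ for every $k$. Consequently the event $\{S_k = 1, Y = 1\}$ coincides with $\{S_k = 1\}$, and conditioning additionally on $Y = 1$ is redundant. First I would record this as $\Prob(\hat{Y} = 1 \mid S_k = 1, Y = 1) = \Prob(\hat{Y} = 1 \mid S_k = 1)$ for all $k$, assuming as in the statement that all these conditional probabilities exist (and, for the ratios below, that $\max_k > 0$).

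Writing $p_k := \Prob(\hat{Y} = 1 \mid S_k = 1)$, this identity lets me replace every equal-opportunity term by the corresponding disparate-impact term. Since the ratio $p_{k_1}/p_{k_2}$ is minimized by pairing the smallest numerator with the largest denominator, and the absolute difference $|p_{k_1} - p_{k_2}|$ is maximized by the extreme pair, both scores reduce to the same two quantities: $\DI = (\min_{k} p_k)/\max_k$ and $\EO = \max_k - \min_{k} p_k$, where $\max_k = \max_{k} p_k$ as in the lemma. This rewriting is the real content of the argument; once it is in place, the two implications are elementary algebra.

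For the first claim I would assume $\DI \ge 1 - \epsilon$, i.e. $\min_{k} p_k \ge (1-\epsilon)\max_k$, and subtract from $\max_k$ to obtain $\EO = \max_k - \min_{k} p_k \le \epsilon \max_k =: \tilde{\epsilon}$. For the second claim I would assume $\EO \le \tilde{\epsilon}$, i.e. $\min_{k} p_k \ge \max_k - \tilde{\epsilon}$, divide by $\max_k > 0$, and obtain $\DI = (\min_{k} p_k)/\max_k \ge 1 - \tilde{\epsilon}/\max_k$, so disparate impact holds with $\epsilon = \tilde{\epsilon}\,(\max_k)^{-1}$.

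I expect the main obstacle to be conceptual rather than computational: recognizing and justifying the inclusion $\{S_k = 1\} \subseteq \{Y = 1\}$, which rests entirely on the WDN-specific definition of the sensitive features and is precisely what removes the dependence on $Y$ that ordinarily separates equal opportunity from disparate impact. A secondary technical point I would flag is the degenerate case $\max_k = 0$, where no group receives positive predictions and the ratios in the disparate-impact score are undefined; under the standing assumption that the relevant conditional probabilities exist and are positive, this case is excluded.
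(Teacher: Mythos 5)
Your proposal is correct and follows essentially the same route as the paper's proof: the key inclusion $\{S_k=1\} \subseteq \{Y=1\}$ making the conditioning on $Y=1$ redundant, the reduction of both scores to $\DI = \min_k / \max_k$ and $\EO = \max_k - \min_k$, and then elementary algebra. Your explicit handling of the algebraic transformations and the flag on the degenerate case $\max_k = 0$ merely spell out what the paper leaves as \enquote{simple equivalent transformations}.
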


\begin{proof}
    First of all, note  that for any $k \in \{1,...,K\}$ and $\omega \in \Omega$, $S_k(\omega)=1$ implies $Y(\omega)=1$ by definition of the sensitive feature $S_k$. Therefore, $\{Y=0, S_k=1\}$ is empty. Subsequently, we obtain 
    $\{Y=1, S_k=1\} 
    %= \{Y=1, S_{k}=1\} \cup \{Y=0, S_{k}=1\} = \{ \{Y = 1\} \cup \{Y = 0\} \} \cap \{S_{k}=1\} = \Omega \cap \{S_{k}=1\} 
    = \{S_k = 1\}$  
    and thus, 
    $\Prob(\hat{Y}=1 ~|~ Y=1, S_{k}=1) = \Prob(\hat{Y}=1 ~|~ S_k=1)$.
    
    Secondly, we also define $\min_k := \min_{ k \in \{1,...,K\} } \Prob(\hat{Y}=1 ~|~ S_{k}=1)$. We then obtain $\DI = \frac{\min_k}{\max_k}$ and, together with the first observation, $\EO = \max_k - \min_k$.

    Now the rest follows by simple equivalent transformations.
\end{proof}

\begin{corollary}
\label{corollary_EOEquivalenceDI}
    Given the setting of lemma \ref{lemma_EOEquivalenceDI},

    \begin{enumerate}
        \item $\EO = \tilde{\EO}$ for $\tilde{\EO} := (1 - \DI ) \cdot \max_k$ and

        \item $\DI = \tilde{\DI}$ for $\tilde{\DI} := 1 - \frac{\EO}{\max_k}$ holds.
    \end{enumerate}
\end{corollary}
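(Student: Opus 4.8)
The plan is to derive the Corollary directly from the two key formulas established in the proof of Lemma~\ref{lemma_EOEquivalenceDI}, namely $\DI = \frac{\min_k}{\max_k}$ and $\EO = \max_k - \min_k$. These two identities are exactly what the Corollary needs, since they express both fairness scores in terms of the same two quantities $\min_k$ and $\max_k$. The Corollary is essentially an \emph{exact} (equality) counterpart to the inequality statements of the Lemma, so rather than reasoning through the $\epsilon$/$\tilde\epsilon$ thresholds, I would work with the raw scores.

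For part~1, I would start from $\tilde{\EO} := (1 - \DI) \cdot \max_k$ and substitute $\DI = \frac{\min_k}{\max_k}$. This gives
\begin{align*}
    \tilde{\EO}
    = \left( 1 - \frac{\min_k}{\max_k} \right) \cdot \max_k
    = \max_k - \min_k
    = \EO,
\end{align*}
where the final equality is precisely the second formula from the Lemma's proof. For part~2, I would symmetrically start from $\tilde{\DI} := 1 - \frac{\EO}{\max_k}$ and substitute $\EO = \max_k - \min_k$, obtaining
\begin{align*}
    \tilde{\DI}
    = 1 - \frac{\max_k - \min_k}{\max_k}
    = \frac{\min_k}{\max_k}
    = \DI,
\end{align*}
again invoking the first formula from the Lemma's proof. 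Both computations are short algebraic rearrangements.

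The only point requiring care is the division by $\max_k$: the definitions of $\tilde{\EO}$ and $\tilde{\DI}$, as well as the formula $\DI = \frac{\min_k}{\max_k}$, implicitly assume $\max_k > 0$. I would note that if $\max_k = 0$, then $\Prob(\hat{Y}=1 \mid S_k = 1) = 0$ for all $k$, so $\min_k = 0$ as well, and the conditional probabilities on which $\DI$ is defined degenerate; thus the nondegenerate case $\max_k > 0$ is the relevant one and the divisions are well defined. Apart from this minor caveat, there is no real obstacle: the substantive work was already carried out in establishing the two closed-form expressions in the Lemma, and the Corollary merely records the resulting exact relationships rather than the one-sided bounds.
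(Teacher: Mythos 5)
Your proposal is correct and takes essentially the same route as the paper: the paper's own proof also reduces the corollary to lemma \ref{lemma_EOEquivalenceDI} (choosing $\epsilon := 1 - \DI$ and $\tilde{\epsilon} := \EO$ and observing that the estimates there are in fact equalities), which rests on exactly the identities $\DI = \frac{\min_k}{\max_k}$ and $\EO = \max_k - \min_k$ that you substitute explicitly. Your added remark on the degenerate case $\max_k = 0$ is a minor point of rigor the paper leaves implicit, but it does not change the argument.
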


\begin{proof}
    This is a direct consequence of lemma \ref{lemma_EOEquivalenceDI}, where we choose $\epsilon := 1 - \DI$ in setting 1 and $\tilde{\epsilon} := \EO$ in setting 2, and where we can work with equalities instead of estimations.
\end{proof}

\subsection{Experimental Results and Analysis: Residual-Based Ensemble Leakage Detection Does Not Obey Fairness}
\label{subsection_ExperimentalResults_In_LeakageDetectionInWaterDistributionNetworks}

In table \ref{table_ResultsH-Method}, the results of the H-method presented in section \ref{subsection_Methodology_In_LeakageDetectionInWaterDistributionNetworks} are shown. The hyperparameter $\Theta \in \X = \R_+^{d_c}$ is chosen manually per diameter $d$ such that the test accuracy is close to maximal. On the one hand, we see that the method in general performs better the larger the leakage size is (measured in \gls{ACC}), as larger leakages are associated with larger pressure drops. Note that the method is capable of detecting even small leakages with high accuracy in larger (and therefore, more realistic) \glspl{WDN} (cf. \cite{Artelt2022Explainability_ThresholdBasedLeakageDetection}).

\begin{table}
    \centering
    \caption{Results of the H-method with 
    $\max_k  = \max_{ k } \Prob(\hat{Y}=1 ~|~ S_{k}=1)$ 
    and 
    $\min_k =  \min_{ k } \Prob(\hat{Y}=1 ~|~ S_{k}=1)$ 
    according to (the proof of) lemma \ref{lemma_EOEquivalenceDI}. Moreover, disparate impact score 
    $\DI$
    and
    $\tilde{\DI}$ % := 1 - \EO \cdot (\max_k)^{-1}$ 
    as well as equal opportunity score
    $\EO$
    and
    $\tilde{\EO}$ % := (1 - \DI) \cdot \max_k$ 
    according to equation \eqref{align_FairnessMeasure_DisparateImpact_Multi}, \eqref{align_FairnessMeasure_EqualOpportunity_Multi} and corollary \ref{corollary_EOEquivalenceDI}.2, \ref{corollary_EOEquivalenceDI}.1, respectively.}
    \label{table_ResultsH-Method}
    \begin{tabular}{l||c|c|c|c|c|c|c}
    $d$ & $\ACC$ & $\max_k$ & $\min_k$ & $\DI$ & $\EO$ & $\tilde{\DI}$ & $\tilde{\EO}$\\
    \hline\hline
    5 & 0.6223 & 0.8468 & 0.4880 & 0.5763 & 0.3558 & 0.5763 & 0.3588\\
    \hline
    10 & 0.7998 & 0.9983 & 0.6372 & 0.6383 & 0.3611 & 0.6383 & 0.3611\\
    \hline
    15 & 0.8837 & 1.0000 & 0.6402 & 0.6402 & 0.3598 & 0.6402 & 0.3598
    \end{tabular}
\end{table}

\noindent On the other hand, 
we see that the method is unfair in terms of disparate impact score \gls{DI}, where a value of 0.8 or larger is desirable (cf. \cite{Zafar2017FairClassification_InProcess_CovarianceConstraints}), and equal opportunity score \gls{EO}. 
However, the experimental evaluation confirms the mathematical findings of corollary \ref{corollary_EOEquivalenceDI} by
comparing the corresponding columns in table \ref{table_ResultsH-Method}.
This also justifies that in our setting, the usage of one of the two measures is sufficient. Therefore, from now on, we work with disparate impact score \gls{DI} only.

\section{Fairness-Enhancing Leakage Detection in Water Distribution Networks}
\label{section_FairnessEnhancingLeakageDetectionInWaterDistributionNetworks}

Motivated by the result that the standard leakage detection method presented in section \ref{subsection_Methodology_In_LeakageDetectionInWaterDistributionNetworks}  does not satisfy the notions of fairness, as another main contribution of this work, we modify this H-method to enhance fairness as introduced in section \ref{section_FairnessInMachineLearning}. 
Given virtual sensors $f_j^r$ for all $j = 1,...,d$ and resulting residuals $r(t_i) \in \X = \R_+^{d_c}$ (cf. section \ref{subsection_Methodology_In_LeakageDetectionInWaterDistributionNetworks}) as well as labels $y(t_i) \in \Y = \{0,1\}$ for times $t_i \in \R$, 
we can turn the choice of the hyperparameter $\Theta := (\theta_j)_{j=1,...,d} \in \X$ of the ensemble classifier $f^c$ (cf. eq. \eqref{align_EnsembleClassifier_LeakageDetection_ThresholdBased}) into an \gls{OP} 
with corresponding function space $\Hyp := \{ f^c:\X \rightarrow \Y, ~ r \mapsto f^c(r,\Theta) ~|~ \Theta \in \X \}$. 
In the following section, we therefore present different, in contrast to the H-method optimization-based, methods that aim at optimizing the parameter $\Theta \in \X$ in order to obtain an optimal ensemble classifier $f^c(\cdot, \Theta_{\text{opt.}}) \in \Hyp$. 
These methods on the one hand are further baselines, where treating the modelling problem as an \gls{OP} enables us to optimize the result of the H-method itself without fairness considerations. 
On the other hand, we consider fairness-enhancing methods, where the parameter $\Theta \in \X$ needs to be optimized such that the resulting ensemble classifier is as accurate and fair on the given training data as possible.

\subsection{Methodology}
\label{subsection_Methodology_In_FairnessEnhancingLeakageDetectionInWaterDistributionNetworks}

The following methods define training algorithms based on labeled training data 
$\D^c = \{ (r(t_i), y(t_i)) \in \X \times \Y ~|~ i = 1,...,n_c \}$\footnote{
    In practise, we train and test the (ensemble) classifier(s) on unseen data for times $i \geq n_r+1$. However, for the sake of readability, we choose the indices $i=1,...,n_c$ instead of $i=n_r+1,...,n_c$ here.
} 
for an $n_c > n_r$, which also holds data based on leaky states of the \gls{WDN}. For simplicity, we omit the dependence of all (loss) functions on the training data.

\subsubsection{Optimizing Loss with Fairness Constraints}
\label{subsubsection_OptimizingLossWithFairnessContraints}

In general, a learning problem can be phrased as an \gls{OP}, where the objective is to minimize some suitable loss function $L: \X \rightarrow \R$ with respect to the parameter $\Theta \in \X$, i.e., 
\begin{align}
\label{align_OptimizationProblem_unconstrained}
\begin{split}
\begin{cases}
    \min_{\Theta \in \X} 
    ~ L(\Theta).
\end{cases}
\end{split}
\end{align}

\noindent The advantage of redefining the choice of hyperparameters $\Theta$ (H-method) as an \gls{OP} is that we can now extend this \gls{OP} by fairness constraints, which can be given by side constraints $C_k: \X \rightarrow \R$ of the underlying \gls{OP}:
\begin{align}
\label{align_OptimizationProblem_constrained}
\begin{split}
\begin{cases}
    \min_{\Theta \in \X} 
    &~ L(\Theta),
    \\
    \text{s.t.} 
    &~ C_{k}(\Theta) \geq 0 ~ \forall k = 1,...,\hat{K}.
\end{cases}
\end{split}
\end{align}

\noindent
\textit{Choice of Loss Functions}
In view of the notions of fairness (cf. section \ref{section_FairnessInMachineLearning}), an intuitive and by means of linearity easily to differentiate loss function is given by the difference of the \gls{FPR} and the \gls{TPR}, i.e., 
$L_1(\Theta) := - \TPR(\Theta) + \FPR(\Theta)$. 
Another classical evaluation score which we can use as a loss function is the accuracy $L_2(\Theta) := - \ACC(\Theta)$.

\textit{Choice of Fairness Constraints}
In terms of fairness constraints, \cite{Zafar2017FairClassification_InProcess_CovarianceConstraints} introduce the covariance between a single binary sensitive feature and the signed distance of a feature vector and the decision boundary of a convex margin-based classifier as a proxy for fairness measurements. We adapt this idea to our setting by considering the covariance of each sensitive feature and replacing the signed distance by the prediction of the ensemble classifier $\hat{Y} = f^c(X, \Theta)$. Using that $\hat{y}(t_i) = f^c(r(t_i), \Theta)$ holds for all realisations $i=1,...,n_c$, for all sensitive features $S_k$ for $k=1,...,K$, the \textit{empirical} covariance is given by
\begin{align}
\label{align_empiricalCovariance_sensitiveFeatureAndPrediction}
\begin{split}
    \Cov_{\text{emp.}}(S_k,\hat{Y})
    =
    \frac{1}{n_c} \sum_{i=1}^{n_c} 
    (s_k(t_i) - \overline{s_k}) \cdot f^c(r(t_i), \Theta).
\end{split}
\end{align}

\begin{remark}
\label{remark_whyCovariance}
    The usage of the (empirical) covariance as a proxy for fairness is based on the idea that fairness of a machine learning model $\hat{Y}$ can be interpreted as the assumption of $\hat{Y}$ being independent of the sensitive feature $S$ (cf. 
    \cite{Mehrabi2021Overview_SurveyOnFairness}), or in our case, each of the sensitive features $S_k$ for $k = 1,...,K$. As independence of two random variables implies their covariance being equal to zero, the latter can be interpreted as a necessary condition for fairness. 
    Therefore, we consider the covariance of each sensitive feature $S_k$ for $k = 1,...,K$ and the prediction of the ensemble classifier $\hat{Y} = f^c(X, \Theta)$, which by linearity is given by
    \begin{align*}
        \Cov(S_k,\hat{Y})
        &=
        \Exp((S_k - \Exp(S_k)) \cdot (\hat{Y} - \Exp(\hat{Y}))
        % \\
        % &= 
        % \Exp((S_k - \Exp(S_k)) \cdot \hat{Y}) 
        % - 
        % \Exp((S_k - \Exp(S_k)) \cdot \Exp(\hat{Y}))
        % \\
        % &= 
        % \Exp((S_k - \Exp(S_k)) \cdot \hat{Y}) 
        % - 
        % \Exp(\hat{Y}) \cdot \left(\Exp(S_k) - \Exp(S_k)\right) 
        % \\
        % &=
        =
        \Exp((S_k - \Exp(S_k)) \cdot \hat{Y}).
    \end{align*}

    \noindent However, as the probability measure $\Prob (S_k, \hat{Y})^{-1}$ on $\Y \times \Y$ is unknown, we replace it by its empirical approximation $\frac{1}{n_c} \sum_{i=1}^{n_c} \delta_{(s_k(t_i), \hat{y}(t_i))}$ and obtain the empirical covariance \eqref{align_empiricalCovariance_sensitiveFeatureAndPrediction}.
    As in practise, an exact value of zero will rarely be achieved, enforcing the empirical covariance to be small is a reasonable fairness proxy.
\end{remark}

\noindent Assuming that a comparatively high (empirical) covariance (in either positive or negative direction) between a sensitive feature $S_k$ for $k \in \{1,...,K\}$ and the model's prediction $\hat{Y} = f^c(X, \Theta)$ implies a significant difference in the relative amount of positive predictions in contrast to the remaining sensitive features leads to the idea of constraining the absolute value of the (empirical) covariance as a side constraint in the above considered \gls{OP} (cf. eq. \eqref{align_OptimizationProblem_unconstrained}) (cf. \cite{Zafar2017FairClassification_InProcess_CovarianceConstraints}). 

Motivated by that, we require 
$
    \Cov_{\text{emp.}}(S_k,\hat{Y}) \leq c 
    \text{ and } 
    \Cov_{\text{emp.}}(S_k,\hat{Y}) \geq -c
$
or, equivalently formulated in standard form, 
\begin{align*}
    C_k(\Theta) := c - \Cov_{\text{emp.}}(S_k,\hat{Y}) \geq 0 
    \text{ and }
    C_k(\Theta) := c + \Cov_{\text{emp.}}(S_k,\hat{Y}) \geq 0
\end{align*}

\begin{wraptable}{r}{0.4\textwidth}
\vspace{-10pt} % please don't remove, the default spacing of wraptable is not nice
    \centering
    \caption{Overview of the proposed methods.}
    \label{table_methods}
    \begin{tabular}{l||c|c}
    Method & Loss & Constraints\\
    \hline\hline 
    T-F-PR & $L_1$ & - \\
    \hline
    T-F-PR+F & $L_1$ & emp. Cov.\\
    \hline
    ACC & $L_2$ & - \\
    \hline
    ACC+F & $L_2$ & emp. Cov.\\
    \end{tabular}
\vspace{-10pt} % please don't remove, the default spacing of wraptable is not nice
\end{wraptable}

\noindent to hold for all $k = 1,...,K$ (i.e., $\hat{K} = 2K$ in equation \eqref{align_OptimizationProblem_constrained}). Hereby, the hyperparameter $c \in [0,\infty)$ regulates how much the covariance's absolute value is bounded and therefore, the desired fairness (cf. remark \ref{remark_whyCovariance}).

\textit{Explicit Methods}
The resulting methods as a combination of used loss function with or without the fairness-enhancing side constraint (cf. \gls{OP} \eqref{align_OptimizationProblem_unconstrained} or \eqref{align_OptimizationProblem_constrained}) deliver two baseline and two fairness-enhancing ensemble leakage detection algorithms, summarized in table \ref{table_methods}.

\textit{Differentiable Approximation of the Learning Problems}
Loss function and side constraint (cf. eq. \eqref{align_empiricalCovariance_sensitiveFeatureAndPrediction}) clearly depend on the model's prediction $\hat{Y} = f^c(X, \Theta)$ resp. $y(t_i) = f^c(r(t_i), \Theta)$ for all $i=1,...,n_c$. However, in view of the model's definition (cf. eq. \eqref{align_EnsembleClassifier_LeakageDetection_ThresholdBased}), $f^c$ is not differentiable with respect to $\Theta$. 
To make $\hat{Y} = f(X, \cdot)$ differentiable, we approximate each indicator function $\1_{\{v > 0\}}$ by the sigmoid function $\sgd_b(v) = (1 + \exp^{-bv})^{-1}$ 
with hyperparameter $b \in \R_+$. 
All in all, we obtain a differentiable \gls{OP} by replacing the ensemble classifier's prediction $f^c(r(t_i), \Theta)$ (cf. eq. \eqref{align_EnsembleClassifier_LeakageDetection_ThresholdBased}) by 
\begin{align}
\label{align_EnsembleClassifier_LeakageDetection_ThresholdBased_Approximiated}
    \hat{f}^c(r(t_i), \Theta) :=
    \sgd_b 
    \big(
    \textstyle \sum_{j=1}^{d}
    %\frac{w_j}{\sum_{\hat{\jmath}} w_{\hat{\jmath}}}
    \sgd_b(r_j(t_i) - \theta_j)
    - T
    \big)
\end{align}

\noindent for all $i = 1,...,n_c$, where we replace the threshold 1 of the exact ensemble classifier $f^c$ by a hyperparameter $T \in [0,1]$ to handle the insecurity of the sigmoid function around zero. 
Then, by expressing the losses $L_1 = -\TPR + \FPR$ and $L_2 = -\ACC$ by
\begin{align*}
    L_1(\Theta) 
    %= - \TPR(\Theta) + \FPR(\Theta)
    &=
    - \frac{
    \sum_{i = 1}^{n_c} y(t_i) \cdot f^c(r(t_i), \Theta)
    }{
    \sum_{i = 1}^{n_c} y(t_i)
    }
    +
    \frac{
    \sum_{i = 1}^{n_c} (1 - y(t_i)) \cdot f^c(r(t_i), \Theta)
    }{
    \sum_{i = 1}^{n_c} (1 - y(t_i))
    },
    \\
    L_2(\Theta) 
    %= - \ACC(\Theta)
    &=
    \frac{
    \sum_{i = 1}^{n_c} y(t_i) \cdot f^c(r(t_i), \Theta)
    +
    \sum_{i = 1}^{n_c} (1 - y(t_i)) \cdot (1 - f^c(r(t_i), \Theta))
    }{
    n_c
    },
\end{align*}

\noindent their approximated versions using $\hat{f}^c$ instead of $f^c$ will be differentiable with respect to $\Theta$ as well, 
and so is the empirical covariance (cf. eq. \eqref{align_empiricalCovariance_sensitiveFeatureAndPrediction}) when using $\hat{f}^c$ instead of $f^c$. The resulting approximated \glspl{OP} can therefore be optimized with a gradient-based optimization technique.

\subsubsection{Optimizing Fairness with Accuracy Constraints}
\label{subsubsection_OptimizingFairnessWithAccuracyContraints}

Instead of optimizing some loss function $L$ under some fairness side constraints, \cite{Zafar2017FairClassification_InProcess_CovarianceConstraints} suggest to optimize a fairness proxy under loss constraints. They use the covariance as a proxy while constraining the training loss by some percentage of the optimal loss obtained when training without fairness considerations. As a variation, we use the disparate impact score $\DI$ directly as a loss function and the accuracy $\ACC$ for the constraint. The resulting DI+ACC-method is therefore given by
\begin{align}
\label{align_OP_FairnesslossWithLossConstraints}
\begin{split}
\begin{cases}
    \min_{\Theta \in \X} 
    &~ -\DI(\Theta),
    \\
    \text{s.t.} 
    &~ \ACC(\Theta) \geq (1 - \lambda) \ACC_{\text{opt.}}.
\end{cases}
\end{split}
\end{align}

\noindent The hyperparameter $\lambda \in [0,1]$ hereby regulates how much the obtained accuracy $\ACC(\Theta)$ is allowed to differ from the optimal accuracy $\ACC_{\text{opt.}}$ received in the ACC-method (cf. table \ref{table_methods}).

In contrast to the methods proposed in section \ref{subsubsection_OptimizingLossWithFairnessContraints}, we like to test the \gls{OP} \eqref{align_OP_FairnesslossWithLossConstraints} as a non-differentiable \gls{OP}, which therefore requires a non-gradient-based optimization technique.

\subsubsection{Evaluation}
\label{subsubsection_Evaluation_In_FairnessEnhancingLeakageDetectionInWaterDistributionNetworks}

We evaluate all presented methods, i.e., the standard H-method 
(section \ref{subsection_Methodology_In_LeakageDetectionInWaterDistributionNetworks}), 
the optimization-based baselines T-F-PR- and ACC-method as well as the fairness-enhancing T-F-PR+F-, ACC+F- 
(cf. section \ref{subsubsection_OptimizingLossWithFairnessContraints} and table \ref{table_methods}) 
and DI+ACC-method 
(cf. section \ref{subsubsection_OptimizingFairnessWithAccuracyContraints}), 
as we did in section \ref{subsubsection_Evaluation_In_LeakageDetectionInWaterDistributionNetworks}.

\subsection{Experimental Results and Analysis}
Based on the pressure measurements in the Hanoi \gls{WDN} as introduced in section \ref{subsection_ApplicationDomainAndDataSet} and the resulting residuals, we test all six methods introduced in section \ref{subsection_Methodology_In_LeakageDetectionInWaterDistributionNetworks} (H-method) and section \ref{subsection_Methodology_In_FairnessEnhancingLeakageDetectionInWaterDistributionNetworks}
(T-F-PR-, ACC-, T-F-PR+F, ACC+F, DI+ACC-method, also see Evaluation in section \ref{subsubsection_Evaluation_In_FairnessEnhancingLeakageDetectionInWaterDistributionNetworks}) per diameter $d$ in practise.

\subsubsection{Setup}
\textit{H-method} 
We use the H-method presented in section \ref{subsection_Methodology_In_LeakageDetectionInWaterDistributionNetworks} and tested in section \ref{subsection_ExperimentalResults_In_LeakageDetectionInWaterDistributionNetworks} as a baseline. Subsequently, we use the hyperparameter found here as an initial parameter $\Theta_0 \in \X$ for the remaining optimization-based methods. 

\textit{Gradient-Based Methods} 
While the T-F-PR- and the ACC-method are used as another baseline, the remaining methods are fairness-enhancing methods. 
The magnitude of fairness can be regulated by a hyperparameter: 
The T-F-PR+F- and ACC+F-method ensure fairness by bounding the empirical covariance of each sensitive feature and the models approximated prediction (cf. eq. \eqref{align_empiricalCovariance_sensitiveFeatureAndPrediction} and \eqref{align_EnsembleClassifier_LeakageDetection_ThresholdBased_Approximiated}) by the hyperparameter $c \in [0, \infty)$. 
In addition, for all these methods, we choose $b=100$ and $T=0.8$.

\textit{Non-Gradient-Based Method} 
In contrast, the DI+ACC-method regulates fairness by different choices of the hyperparameter $\lambda \in [0,1]$ that controls how much loss in accuracy is allowed while increasing fairness.

\textit{Transforming Constraint \glspl{OP} in Non-Constraint \glspl{OP}} 
For all \glspl{OP}, we use the log-barrier method (cf. \cite{Nocedal2006NumericalOptimization}) to transform the constrained \gls{OP} into a non-constrained one and tune the regularization hyperparameter per method. 

\textit{Optimization Techniques} 
For the differentiable \glspl{OP} (T-F-PR-, ACC-, T-F-PR+F- and ACC+F-method), we use BFGS (cf. \cite{Nocedal2006NumericalOptimization}) to find the optimal parameter $\Theta_{\text{opt.}} \in \X$. For the non-differentiable \gls{OP} (DI+ACC-method), we use Downhill-Simplex-Search, also known as Nelder-Mead (cf. \cite{Gao2012NelderMead}). Each method is trained per diameter $d$ on 40\% of the data and evaluated on the remaining data.

\begin{wrapfigure}{r}{0.45\textwidth}
\vspace{-25pt} % please don't remove, the default spacing of wrapfigure is not nice
\begin{minipage}{\linewidth}
    \centering
    \includegraphics[scale=0.44]{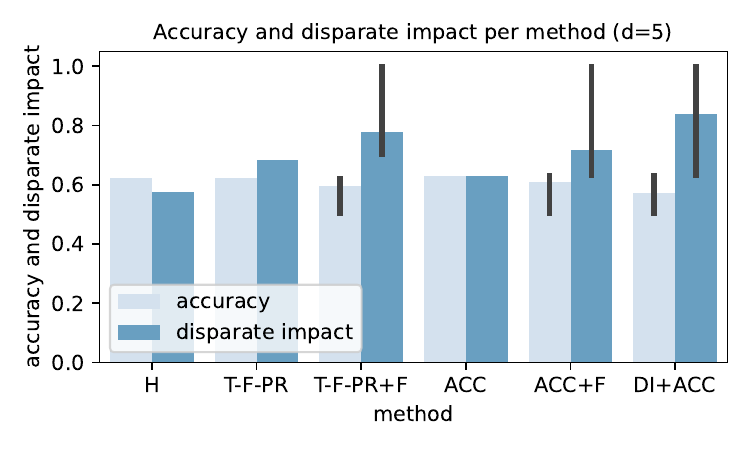}
    \hfill
    \includegraphics[scale=0.44]{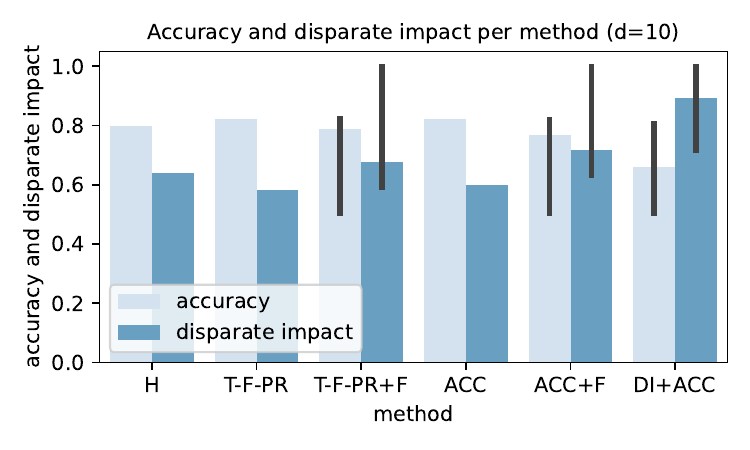}
    \hfill
    \includegraphics[scale=0.44]{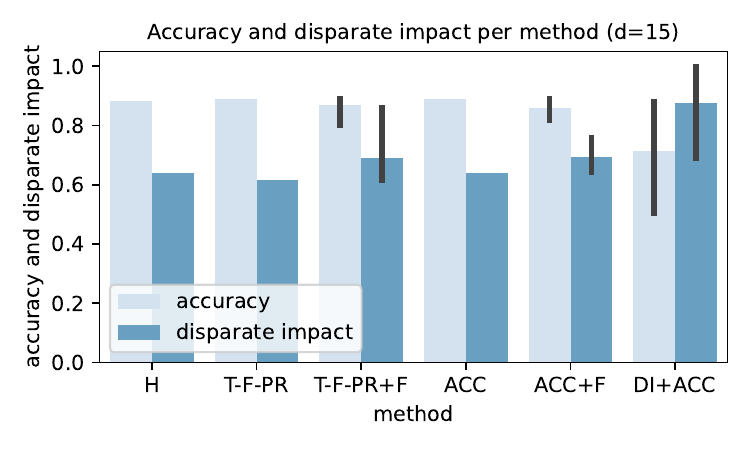}
\end{minipage}
\caption{Accuracy and disparate impact score per method and leakage diameter.}
\label{figure_ACCandDIperMethod}
\vspace{-10pt} % please don't remove, the default spacing of wrapfigure is not nice
\end{wrapfigure}

\subsubsection{Results and Analysis}
\textit{Increasing Fairness}
In figure \ref{figure_ACCandDIperMethod}, we see the performance of each ensemble classifier measured in accuracy and disparate impact score.
For the fairness-enhancing methods, we test different hyperparameters, causing error bars for these methods. 
We start with a hyperparameter $c$ and $\lambda$ that causes an accuracy of 0.5 and disparate impact score of 1.0 whenever possible and in- resp. decrease the hyperparameter by 0.01 until the disparate impact score of the fairness-enhancing model achieves the disparate impact score of the corresponding baseline (T-F-PR for T-F-PR+F and ACC for ACC+F and DI+ACC). 
The height of the bars with error bars correspond to the mean accuracy and disparate impact score, achieved by each method over all hyperparameters tested. The error bars themselves reach from the lowest to the largest score of the two scores considered. 

We see that for all fairness-enhancing methods and all leakage sizes, the fairness-enhancing methods on average increase fairness, measured by disparate impact score, while on average, mostly decreasing accuracy by only some small percentage compared to their corresponding baselines. 
For $d=5$, all fairness-enhancing methods allow a large range of fairness improvement at cost of a small range in accuracy, which is only due to the relatively poor accuracy of the leakage detector in this scenario in general. For the other diameters, the ranges of fairness and accuracy are similarly large. Thus, one can say that fairness and overall performance are mutually dependent to about the same extent.  

\textit{The Coherence of Fairness and Overall Performance}
A more detailed visualization of how fairness is related to the overall performance of the model can be found in figure \ref{figure_CoherenceACCandDI}. 
For each tested hyperparameter $c$ and $\lambda$, respectively, depending on what fairness-enhancing method was used, the obtained disparate impact score is plotted together with the observed accuracy. For better readability, we split these observations by the leakage sizes tested.
\textcolor{white}{Some words to debug.} % please don't remove, trying to fix some weird LaTex behaviour here
% % please don't remove, trying to fix some weird LaTex behaviour here
\begin{wrapfigure}{l}{0.45\textwidth}
\vspace{-15pt} % please don't remove, the default spacing of wrapfigure is not nice
\centering
\includegraphics[width=0.44\textwidth]{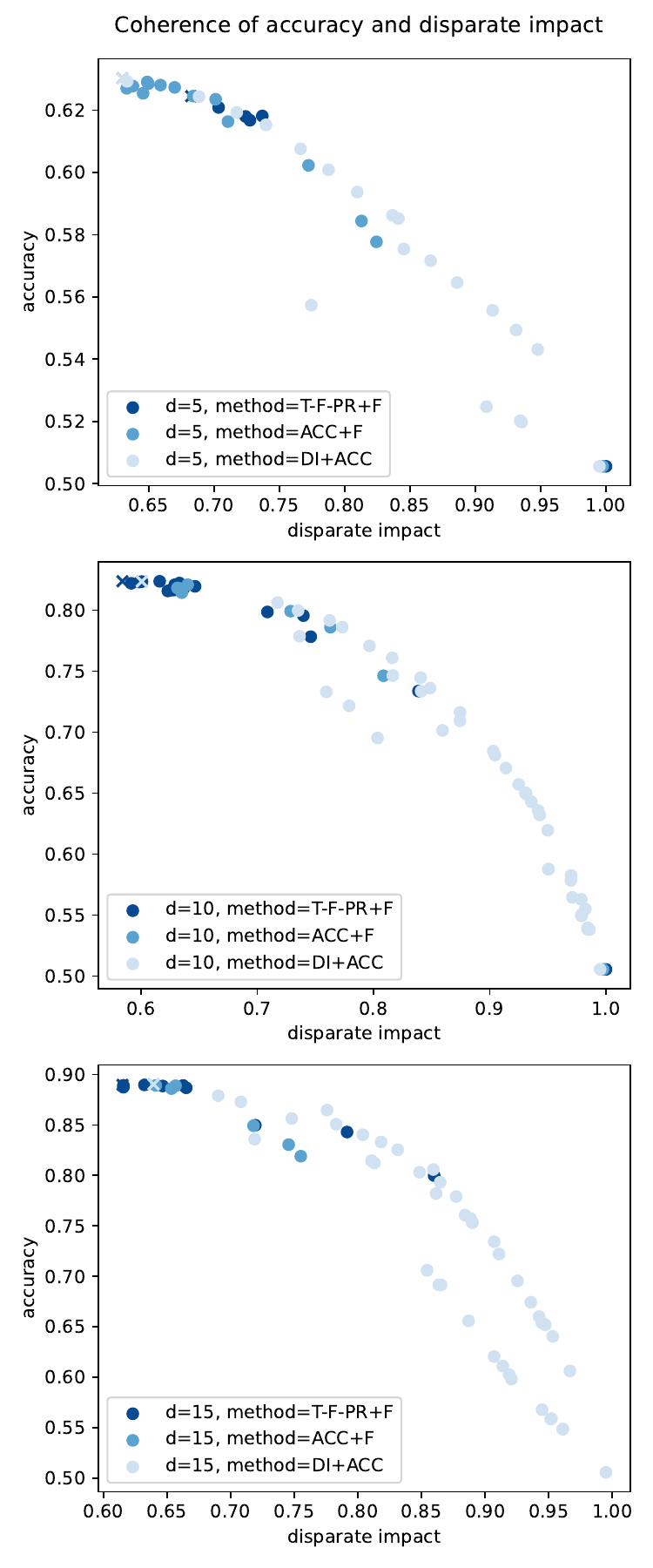}
\caption{Coherence of accuracy and disparate impact score for the different fairness-enhancing methods and different leakage sizes. The cross data points visualize the disparate impact score and accuracy of the non-fairness-enhancing baselines  (T-F-PR, dark blue, for T-F-PR+F and ACC, light blue, for ACC+F and DI+ACC).}
\label{figure_CoherenceACCandDI}
\vspace{-10pt} % please don't remove, the default spacing of wrapfigure is not nice
\end{wrapfigure}
% % please don't remove, trying to fix some weird LaTex behaviour here
\indent % please don't remove, trying to fix some weird LaTex behaviour here
The characteristic curve that can be observed in all subimages is the so-called pareto front, visualizing that the increase in fairness is accompanied by the reduction in accuracy score and vice versa. 
A desired disparate impact score of about 0.8 can be achieved by a decrease of accuracy by approximately 0.03 - 0.05 points below the optimal accuracy obtained.
Hereby, both fairness and overall performance can be influenced by the fairness hyperparameters $c$ and $\lambda$, respectively. 
Deciding which choice of fairness hyperparameter is optimal is a difficult task that depends on the extent of the decisions of the underlying \gls{ML} model as well as legal requirements. Regarding legal requirements, by not using the sensitive features for the decision making of the algorithm, the methods presented can satisfy the legal definition of disparate treatment \textit{and} disparate impact (depending on the hyperparameter chosen) simultaneously.
\\ % please don't remove, trying to fix some weird LaTex behaviour here
\indent % please don't remove, trying to fix some weird LaTex behaviour here
Another observation is that the largest accuracies of the fairness-enhancing methods are approximately as good as the accuracy of their baseline methods while achieving equal or better fairness results. 
In opposite direction, perfect fairness of 1.0 can be achieved at a cost of the worst possible accuracy of 0.5.
While for the covariance-based algorithms (T-F-PR+F- and ACC+F-method), the jump in disparate impact and accuracy score is rather abrupt when reaching the extreme of (1.0, 0.5), the method relying on the optimization of fairness while constraining on the accuracy (DI+ACC-method) allows more fine-grained variations in both scores.
This is due to the fact that the hyperparameter $\lambda$ regulates the accuracy and not the fairness measure. The accuracy constraint is less sensitive to the log barrier method than the covariance constraints, since a too small choice of the hyperparameter $c$ quickly sets all punishment terms to infinity and thus outputs the trivial solution.
\newpage
\textit{The Influence of the Hyperparameters on Fairness and Overall Performance}
In figure \ref{figure_CoherenceACCandDIandHyperparameter}, we show how the hyperparameters are related to disparate impact and accuracy score. 
Each of the two scores is plotted against the used hyperparameter for all fairness-enhancing methods and leakage diameters tested. 

\begin{figure}[h!]
\centering
\includegraphics[width=0.95\textwidth]{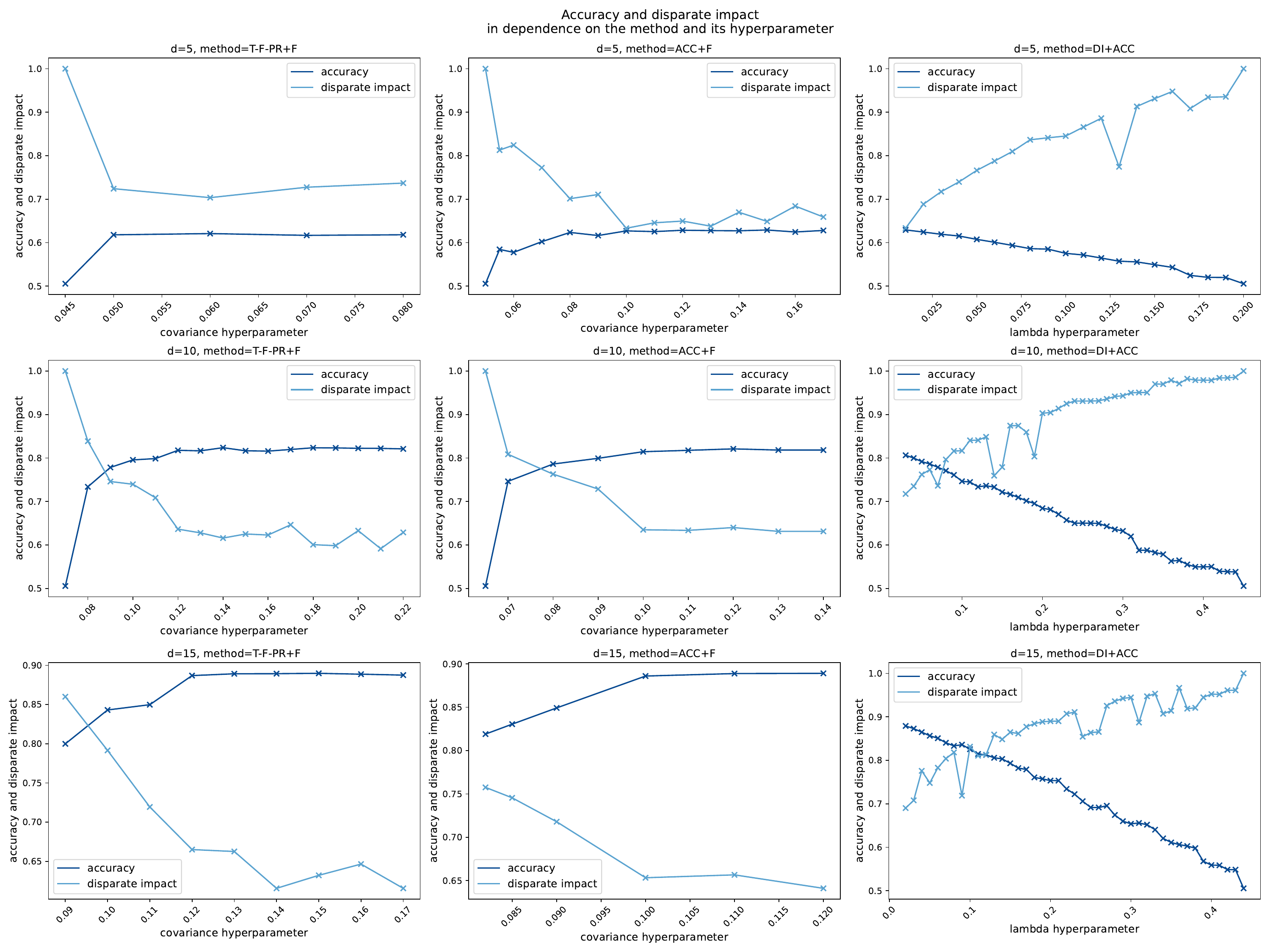}
\caption{Coherence of accuracy, disparate impact and the training hyperparameter.}
\label{figure_CoherenceACCandDIandHyperparameter}
\end{figure}

\noindent 
For the T-F-PR+F- and the ACC+F-method, the decrease of the hyperparameter $c$ is accompanied by the improvement of the fairness measure as well as the decrease of the performance measure. This can be explained by the intuition described before: 
A high empirical covariance of a sensitive feature and the prediction of the approximated ensemble model means that the relative number of positive predictions within the related group differs significantly from the relative number of positive predictions within a group with small covariance. Thus, the more the covariance is constrained, the less such extreme differences in the relative number of positive predictions across groups occur, leading to a better fairness score. In the case of disparate impact, therefore, a (better) higher score at the expense of a (worse) lower overall performance - compared to the overall performance that occurs in the unconstrained case or for a looser constraint, that is a larger bound $c$, - appears. 

In contrast, for the DI+ACC method the increase of the hyperparameter $\lambda$ is accompanied by the improvement of the fairness measure as well as the decrease of the performance measure 
due to the fact that a higher hyperparameter $\lambda$ allows a larger deviation of the optimal accuracy score. Thus, a worse accuracy is penalized less so that the fairness measure can be optimized to a larger extend. 

\textit{Non Optimality}
Last but not least, note that the non-optimal solutions and the local jumps recognized in figure \ref{figure_CoherenceACCandDI} and \ref{figure_CoherenceACCandDIandHyperparameter}, respectively, 
can be explained by the non-convexity of the objective functions. Therefore, the found solutions strongly depend on the initialized parameter $\Theta_0$ and might not correspond to the global optimum.

\section{Conclusion}
\label{section_Conclusion}

In this work, we introduced the notion of fairness in an application domain of high social and ethical relevance, namely in the field of \acrfullpl{WDN}. This required the extension of fairness definitions for a single binary sensitive feature to single non-binary or multiple, possibly even non-binary, sensitive features. We then investigated on the fairness issue in the area of leakage detection within \glspl{WDN}. We showed that standard approaches are not fair in the context of different groups related to the locality within the network. As a remedy, we presented methods that increase fairness of the ensemble classification model with respect to the introduced fairness notion while satisfying the legal notions of disparate treatment and disparate impact simultaneously. We empirically demonstrated that fairness and overall performance of the model are interdependent and the use of hyperparameters provides the ability to trade off fairness and overall performance. However, this trade off lies in the responsibility of the policy maker. 

To allow more fine-grained steps between improving fairness and decreasing overall performance in the presented covariance-based approaches, next steps would be to swap loss function and constraint to achieve similar results as in the approach with accuracy constraint. 
Moreover, the notion of fairness within the water domain is still in its beginning and extensions to more complex \glspl{WDN} as well as more powerful \gls{ML} algorithms is essential.

\section{Acknowledgments}
We gratefully acknowledge funding from the \gls{ERC} under the \gls{ERC} Synergy Grant Water-Futures (Grant agreement No. 951424).
\\\\
This work was first published in the proceedings of the 17th International Work-Conference on Artificial Neural Networks (IWANN) in \href{https://link.springer.com/chapter/10.1007/978-3-031-43085-5_10}{volume 14134 of Lecture Notes in Computer Science, pages 119--133, by Springer Nature in 2023} (cf. \cite{Strotherm2023FairClassification_InProcess_CovarianceConstraints}).
%This work was first published in the proceedings of the 17th International Work-Conference on Artificial Neural Networks (IWANN) in volume 14134 of Lecture Notes in Computer Science, pages 119--133, by Springer Nature in 2023.

% ---------- END ACTUAL TEXT ----------

% load glossary etc.
% MODIFY / ADD: 

% ---------- GLOSSARY ----------
\newpage
% for glossary style
    % available styles: list, altlist, listgroup, listhypergroup
\setglossarystyle{list}
% for printing the glossary
    % use argument "nonumberlist" if the pages of the entries should not be printed
\printglossary[title=Glossary, nonumberlist]
% for adding glossary to contents
\addcontentsline{toc}{section}{Glossary}

% ---------- END GLOSSARY ----------

% ---------- ACRONYMS / LIST OF ABBREVIATIONS ----------
%\newpage

% --- OPTION 1 (comes with glossaries package)
% for list of abbreviations style
    % available styles: list, altlist, listgroup, listhypergroup
\setglossarystyle{list}
% for printing list of abbreviations
    % use argument "nonumberlist" if the pages of the entries should not be printed
\printglossary[type=\acronymtype, title=List of Abbreviations, nonumberlist]
% for adding list of abbreviations to contents
\addcontentsline{toc}{section}{List of Abbreviations}

% --- OPTION 2
% for printing list of abbreviations
%\printacronyms
% for adding list of abbreviations to contents
%\addcontentsline{toc}{section}{List of Abbreviations}

% ---------- END ACRONYMS / LIST OF ABBREVIATIONS ----------

% ---------- BIBLIOGRAPHY ----------
\newpage

% --- OPTION 1: hand-made bibliography
    % remark: cf. 2_bibliography!.tex  
% \begin{thebibliography}{100}

%     \bibitem{Author1:Title1}
%     LastName, FirstName (2022): \textit{Title of the article}. Journal 01.01. p. 1-100.

%     \bibitem{Author2:Title2}
%     LastName, FirstName (2022): \textit{Title of the book}. Publisher, Vol. 02.02. p. 1-200.
    
% \end{thebibliography}
% for adding bibliography to contents
%\addcontentsline{toc}{section}{References}

% --- OPTION 2: BibTex bibliography
    % remark: cf. 2_bibliography!.tex
\bibliography{main}
% for adding bibliography to contents
%\addcontentsline{toc}{section}{References}

% --- OPTION 3: BibLaTex bibliography
    % remark: cf. 2_bibliography!.tex
%\printbibliography
% for adding bibliography to contents
%\addcontentsline{toc}{section}{Bibliography}

% ---------- END BIBLIOGRAPHY ----------
% load appendix
% MODIFY / ADD: 

% ---------- APPENDIX ----------

% ---------- END APPENDIX ----------

\end{document}